\newcommand{\Px}{\mathcal{P}}
\newcommand{\Maj}{{\rm Majority}} 
\newcommand{\SC}{\mathcal{M}} 
\renewcommand{\dim}{d} 
\newcommand{\Log}{{\rm Log}}
\newcommand{\conf}{\delta}
\newcommand{\eps}{\varepsilon}
\newcommand{\X}{\mathcal X} 
\newcommand{\Y}{\mathcal Y} 
\newcommand{\genericalg}{\mathcal{A}} 
\newcommand{\alg}{\mathbb{A}} 
\renewcommand{\H}{\mathcal H} 
\newcommand{\target}{f^{\star}} 
\newcommand{\er}{{\rm er}} 
\newcommand{\ind}{\mathbbm{1}}
\newcommand{\C}{\mathbb{C}} 
\newcommand{\sign}{{\rm sign}} 
\renewcommand{\P}{\mathbb P} 
\newcommand{\nats}{\mathbb{N}} 
\newcommand{\E}{\mathbb{E}}
\newcommand{\hmaj}[1]{h_{#1}} 
\newcommand{\hmaji}{\hmaj{i}} 
\newcommand{\Data}{\mathbb{S}}
\newcommand{\ProcX}{\mathbb{X}}
\renewcommand{\a}{a} 
\renewcommand{\b}{b} 
\newsavebox{\savepar}
\newenvironment{bigboxit}{\begin{center}\begin{lrbox}{\savepar}
\begin{minipage}[h]{5.6in}
\normalfont
\begin{flushleft}}
{\end{flushleft}\end{minipage}\end{lrbox}\fbox{\usebox{\savepar}}
\end{center}}
\newcommand{\vast}{\bBigg@{3}}
\newcommand{\Vast}{\bBigg@{4}}
\renewenvironment{proof}[1][]{\par\noindent{\bf Proof #1\ }}{\hfill\BlackBox\\[2mm]}
\begin{document}

\title{The Optimal Sample Complexity of PAC Learning}

\author{%
\name Steve Hanneke \email steve.hanneke@gmail.com
}


\maketitle

\begin{abstract}%
This work establishes a new upper bound on the 
number of samples sufficient for PAC learning in the realizable case.
The bound matches known lower bounds up to numerical constant factors.
This solves a long-standing open problem on the sample complexity of PAC learning.
The technique and analysis build on a recent breakthrough by Hans Simon.
\end{abstract}


\section{Introduction}
\label{sec:intro}

Probably approximately correct learning (or \emph{PAC} learning; \citealp*{valiant:84})
is a classic criterion for supervised learning, which has been
the focus of much research in the past three decades.  
The objective in PAC learning is to produce a classifier 
that, with probability at least $1-\conf$, has error rate
at most $\eps$.  To qualify as a PAC learning algorithm, it must 
satisfy this guarantee for all possible target concepts in a given family,
under all possible data distributions.
To achieve this objective, the learning algorithm
is supplied with a number $m$ of i.i.d.\! training samples (data points), 
along with the corresponding correct classifications.  One of the central
questions in the study of PAC learning is determining the minimum
number $\SC(\eps,\conf)$ of training samples necessary and sufficient 
such that there exists a PAC learning algorithm requiring at most 
$\SC(\eps,\conf)$ samples (for any given $\eps$ and $\conf$).
This quantity $\SC(\eps,\conf)$ is known as the \emph{sample complexity}.

Determining the sample complexity of PAC learning
is a long-standing open problem.  
There have been upper and lower bounds established
for decades, but they differ by a logarithmic factor.  It has been widely
believed that this logarithmic factor can be removed
for certain well-designed learning algorithms, and attempting
to prove this has been the subject of much effort.  
\citet*{simon:15} has very recently made an enormous leap forward 
toward resolving this issue.  
That work proposed an algorithm that classifies points based on 
a majority vote among classifiers trained on independent data sets.
Simon proves that this algorithm achieves a sample complexity
that reduces the logarithmic factor in the upper bound down to a
very slowly-growing function.
However, that work does not quite completely resolve the gap,
so that determining the optimal sample complexity remains open.

The present work resolves this problem by completely
eliminating the logarithmic factor.
The algorithm achieving this new bound is also based on a 
majority vote of classifiers.  However, unlike Simon's algorithm,
here the voting classifiers are trained on data subsets specified by 
a recursive algorithm, with substantial overlaps among the
data subsets the classifiers are trained on.

\section{Notation}
\label{sec:notation}

We begin by introducing some basic notation
essential to the discussion.
Fix a nonempty set $\X$, called the \emph{instance space};
we suppose $\X$ is equipped with a $\sigma$-algebra, defining
the measurable subsets of $\X$.
Also denote $\Y = \{-1,+1\}$, called the \emph{label space}.
A \emph{classifier} is any measurable function $h : \X \to \Y$.
Fix a nonempty set $\C$ of classifiers, called the \emph{concept space}.
To focus the discussion on nontrivial cases,\footnote{The 
sample complexities for $|\C| = 1$ and $|\C| = 2$ are already quite well understood
in the literature, the former having sample complexity $0$, and the latter having 
sample complexity either $1$ or $\Theta(\frac{1}{\eps}\ln\frac{1}{\conf})$
(depending on whether the two classifiers are exact complements or not).}  
we suppose $|\C| \geq 3$;
other than this, the results in this article will be valid for \emph{any} choice of $\C$.

In the learning problem, there is a probability measure $\Px$ over $\X$,
called the \emph{data distribution}, and a sequence $X_{1}(\Px),X_{2}(\Px),\ldots$ of 
independent $\Px$-distributed random variables, 
called the \emph{unlabeled data};
for $m \in \nats$, also define $\ProcX_{1:m}(\Px) = \{X_{1}(\Px),\ldots,X_{m}(\Px)\}$,
and for completeness denote $\ProcX_{1:0}(\Px) = \{\}$.
There is also a special element of $\C$, denoted $\target$,
called the \emph{target function}.  
For any sequence $S_{x} = \{x_{1},\ldots,x_{k}\}$ in $\X$,
denote by $(S_{x},\target(S_{x})) = \{ (x_{1},\target(x_{1})), \ldots, (x_{k},\target(x_{k})) \}$.
For any probability measure $P$ over $\X$, and any classifier $h$, 
denote by $\er_{P}(h;\target) = P( x : h(x) \neq \target(x) )$.
A \emph{learning algorithm} $\genericalg$ is a map,\footnote{We also admit randomized algorithms $\genericalg$,
where the ``internal randomness'' of $\genericalg$ is assumed to be independent of the data.
Formally, there is a random variable $R$ independent of $\{X_{i}(P)\}_{i,P}$ such that 
the value $\genericalg(S)$ is determined by the input data $S$ and the value of $R$.}
mapping any sequence $\{(x_{1},y_{1}),\ldots,(x_{m},y_{m})\}$ in $\X \times \Y$ (called a \emph{data set}),
of any length $m \in \nats \cup \{0\}$, to a classifier $h : \X \to \Y$
(not necessarily in $\C$).

\begin{definition}
\label{def:sc}
For any $\eps,\conf \in (0,1)$, the 
\emph{sample complexity of $(\eps,\conf)$-PAC learning}, 
denoted $\SC(\eps,\conf)$,
is defined as the smallest $m \in \nats \cup \{0\}$ for which
there exists a learning algorithm $\genericalg$ such that,
for every possible data distribution $\Px$, $\forall \target \in \C$, 
denoting $\hat{h} = \genericalg( (\ProcX_{1:m}(\Px),\target(\ProcX_{1:m}(\Px))) )$,
\begin{equation*}
\P\left( \er_{\Px}\left( \hat{h}; \target \right) \leq \eps \right) \geq 1-\conf.
\end{equation*}
If no such $m$ exists, define $\SC(\eps,\conf) = \infty$.
\end{definition}

The sample complexity is our primary object of study in this work.
We require a few additional definitions before proceeding.
Throughout, we use a natural extension of set notation to sequences:
for any finite sequences $\{a_{i}\}_{i=1}^{k}, \{b_{i}\}_{i=1}^{k^{\prime}}$,
we denote by $\{a_{i}\}_{i=1}^{k} \cup \{b_{i}\}_{i=1}^{k^{\prime}}$
the concatenated sequence $\{a_{1},\ldots,a_{k},b_{1},\ldots,b_{k^{\prime}}\}$.
For any set $A$, we denote by $\{a_{i}\}_{i=1}^{k} \cap A$
the subsequence comprised of all $a_{i}$ for which $a_{i} \in A$. 
Additionally, we write $b \in \{b_{i}\}_{i=1}^{k^{\prime}}$ to indicate $\exists i \leq k^{\prime}$ s.t. $b_{i} = b$,
and we write $\{a_{i}\}_{i=1}^{k} \subseteq \{b_{i}\}_{i=1}^{k^{\prime}}$
or $\{b_{i}\}_{i=1}^{k^{\prime}} \supseteq \{a_{i}\}_{i=1}^{k}$
to express that $a_{j} \in \{b_{i}\}_{i=1}^{k^{\prime}}$ for every $j \leq k$.
We also denote $| \{a_{i}\}_{i=1}^{k} | = k$ (the length of the sequence).
For any $k \in \nats \cup \{0\}$ and any sequence $S = \{(x_{1},y_{1}),\ldots,(x_{k},y_{k})\}$ of points in $\X \times \Y$, 
denote $\C[S] = \{ h \in \C : \forall (x,y) \in S, h(x) = y \}$,
referred to as the set of classifiers \emph{consistent} with $S$.

Following \citet*{vapnik:71}, we say a sequence $\{x_{1},\ldots,x_{k}\}$ of points in $\X$
is \emph{shattered} by $\C$ if $\forall y_{1},\ldots,y_{k} \in \Y$, $\exists h \in \C$
such that $\forall i \in \{1,\ldots,k\}$, $h(x_{i}) = y_{i}$: that is, there are $2^{k}$
distinct classifications of $\{x_{1},\ldots,x_{k}\}$ realized by classifiers in $\C$.
The Vapnik-Chervonenkis dimension (or \emph{VC dimension})
of $\C$ is then defined as the largest integer $k$ for which 
there exists a sequence $\{x_{1},\ldots,x_{k}\}$ in $\X$ shattered by $\C$;
if no such largest $k$ exists, the VC dimension is said to be infinite.
We denote by $\dim$ the VC dimension of $\C$.  This quantity is 
of fundamental importance in characterizing the sample complexity of 
PAC learning.  In particular, it is well known that the sample complexity 
is finite for any $\eps,\conf \in (0,1)$ if and only if $\dim < \infty$ 
\citep*{vapnik:82,blumer:89,ehrenfeucht:89}.
For simplicity of notation, for the remainder of this article
we suppose $\dim < \infty$; furthermore, note that our assumption of $|\C| \geq 3$ 
implies $\dim \geq 1$.

We adopt a common variation on big-O asymptotic notation, used in much of the 
learning theory literature.  Specifically, for functions $f,g : (0,1)^{2} \to [0,\infty)$, 
we let $f(\eps,\conf) = O( g(\eps,\conf) )$ denote the assertion that
$\exists \eps_{0},\conf_{0} \in (0,1)$ and $c_{0} \in (0,\infty)$ such that, 
$\forall \eps \in (0,\eps_{0})$, $\forall \conf \in (0,\conf_{0})$, 
$f(\eps,\conf) \leq c_{0} g(\eps,\conf)$; however, we also require that
the values $\eps_{0},\conf_{0},c_{0}$ in this definition be 
\emph{numerical constants}, meaning that they are 
\emph{independent of $\C$ and $\X$}.
For instance, this means $c_{0}$ cannot depend on $\dim$.
We equivalently write $f(\eps,\conf) = \Omega( g(\eps,\conf) )$ to assert 
that $g(\eps,\conf) = O( f(\eps,\conf) )$.  Finally, we write 
$f(\eps,\conf) = \Theta( g(\eps,\conf) )$ to assert 
that both $f(\eps,\conf) = O( g(\eps,\conf) )$ and $f(\eps,\conf) = \Omega( g(\eps,\conf) )$ hold.
We also sometimes write $O( g(\eps,\conf) )$ in an expression, 
as a place-holder for some function $f(\eps,\conf)$ satisfying $f(\eps,\conf) = O(g(\eps,\conf))$:
for instance, the statement $N(\eps,\conf) \leq \dim + O( \log(1/\conf) )$ expresses that
$\exists f(\eps,\conf) = O(\log(1/\conf))$ for which $N(\eps,\conf) \leq \dim + f(\eps,\conf)$.
Also, for any value $z \geq 0$, define $\Log(z) = \ln( \max\{z, e\} )$ and similarly $\Log_{2}(z) = \log_{2}(\max\{z,2\})$.

As is commonly required in the learning theory literature, we adopt the 
assumption that the events appearing in probability claims
below are indeed measurable.  For our purposes, this comes into effect only in the application
of classic generalization bounds for sample-consistent classifiers (Lemma~\ref{lem:vc-bound} below).
See \citet*{blumer:89} and \citet*{van-der-Vaart:96} for discussion of conditions on $\C$ sufficient for 
this measurability assumption to hold.

\section{Background}
\label{sec:background}

Our objective in this work is to establish \emph{sharp} sample complexity bounds.
As such, we should first review the known \emph{lower bounds} on $\SC(\eps,\conf)$.
A basic lower bound of $\frac{1-\eps}{\eps} \ln\left(\frac{1}{\conf}\right)$ 
was established by \citet*{blumer:89} for $0 < \eps < 1/2$ and $0 < \conf < 1$.
A second lower bound of $\frac{\dim-1}{32\eps}$ was supplied by \citet*{ehrenfeucht:89},
for $0 < \eps \leq 1/8$ and $0 < \conf \leq 1/100$.  
Taken together, these results imply that, for any $\eps \in (0,1/8]$ and $\conf \in (0,1/100]$, 
\begin{equation}
\label{eqn:lower-bound}
\SC(\eps,\conf) \geq \max\left\{ \frac{\dim-1}{32\eps}, \frac{1-\eps}{\eps}\ln\left(\frac{1}{\conf}\right) \right\} 
= \Omega\left( \frac{1}{\eps} \left( \dim + \Log\left(\frac{1}{\conf}\right) \right) \right).
\end{equation}

This lower bound is complemented by classic \emph{upper bounds} 
on the sample complexity.  In particular, \citet*{vapnik:82} and \citet*{blumer:89} 
established an upper bound of 
\begin{equation}
\label{eqn:classic-vc-upper-bound}
\SC(\eps,\conf) = O\left( \frac{1}{\eps} \left( \dim \Log\left(\frac{1}{\eps}\right) + \Log\left(\frac{1}{\conf}\right) \right) \right).
\end{equation}
They proved that this sample complexity bound is in fact achieved by any algorithm that returns a classifier $h \in \C[ (\ProcX_{1:m}(\Px),\target(\ProcX_{1:m}(\Px))) ]$,
also known as a \emph{sample-consistent learning algorithm} (or \emph{empirical risk minimization} algorithm).
A sometimes-better upper bound was established by \citet*{haussler:94}:
\begin{equation}
\label{eqn:hlw-upper-bound}
\SC(\eps,\conf) = O\left( \frac{\dim}{\eps} \Log\left(\frac{1}{\conf}\right) \right).
\end{equation}
This bound is achieved by a modified variant of the \emph{one-inclusion graph prediction algorithm},
a learning algorithm also proposed by \citet*{haussler:94}, which has been conjectured
to achieve the optimal sample complexity \citep*{warmuth:04}.

In very recent work, \citet*{simon:15} produced a breakthrough insight.
Specifically, by analyzing a learning algorithm based on a simple majority vote 
among classifiers consistent with distinct subsets of the training data, 
\citet*{simon:15} established that, 
for any $K \in \nats$, 
\begin{equation}
\label{eqn:simon-upper-bound}
\SC(\eps,\conf) = O\left( \frac{2^{2K}\sqrt{K}}{\eps} \left( \dim \log^{(K)}\left(\frac{1}{\eps}\right) + K + \Log\left(\frac{1}{\conf}\right) \right) \right),
\end{equation}
where $\log^{(K)}(x)$ is the $K$-times iterated logarithm: $\log^{(0)}(x) = \max\{x,1\}$ and $\log^{(K)}(x) = \max\{\log_{2}(\log^{(K-1)}(x)),1\}$.
In particular, one natural choice would be $K \approx \log^{*}\!\left(\frac{1}{\eps}\right)$,\footnote{The function 
$\log^{*}(x)$ is the iterated logarithm: the smallest $K \in \nats \cup \{0\}$ for which $\log^{(K)}(x) \leq 1$.  It is an extremely slowly growing function of $x$.}
which (one can show) optimizes the asymptotic dependence on $\eps$ in the bound,
yielding
\begin{equation*}
\SC(\eps,\conf) = O\left( \frac{1}{\eps} 2^{O(\log^{*}(1/\eps))} \left( \dim + \Log\left(\frac{1}{\conf}\right) \right) \right).
\end{equation*}
In general, the entire form of the bound \eqref{eqn:simon-upper-bound} is optimized (up to numerical constant factors) by choosing
$K = \max\left\{\log^{*}\!\left(\frac{1}{\eps}\right) - \log^{*}\!\left(\frac{1}{\dim}\Log\!\left(\frac{1}{\conf}\right)\right)+1, 1\right\}$.
Note that, with either of these choices of $K$, there is a range of $\eps$, $\conf$, and $\dim$ values for which the bound \eqref{eqn:simon-upper-bound} is strictly smaller than both 
\eqref{eqn:classic-vc-upper-bound} and \eqref{eqn:hlw-upper-bound}:
for instance, for small $\eps$, it suffices to have $\Log(1/\conf) \ll \dim \Log(1/\eps) / ( 2^{2\log^{*}(1/\eps)}\sqrt{\log^{*}(1/\eps)} )$
while $2^{2\log^{*}(1/\eps)}\sqrt{\log^{*}(1/\eps)} \ll \min\{\Log(1/\conf),\dim\}$.
However, this bound still does not quite match the form of the lower bound \eqref{eqn:lower-bound}.

There have also been many special-case analyses, studying restricted types
of concept spaces $\C$ for which the above gaps can be closed
\citep*[e.g.,][]{auer:07,darnstadt:15,hanneke:15b}.
However, these special conditions do not include many of the most-commonly
studied concept spaces, such as linear separators and multilayer neural networks.
There have also been a variety of studies that, in addition to restricting to specific 
concept spaces $\C$, also introduce strong restrictions on the data distribution $\Px$, 
and establish an upper bound of the same form as the lower bound \eqref{eqn:lower-bound}
under these restrictions \citep*[e.g.,][]{long:03,gine:06,long:09,hanneke:thesis,hanneke:15b,balcan:13}.
However, there are many interesting classes $\C$ and distributions $\Px$ for which 
these results do not imply any improvements over \eqref{eqn:classic-vc-upper-bound}.
Thus, in the present literature, there persists a gap between the lower bound \eqref{eqn:lower-bound}
and the minimum of all of the known upper bounds \eqref{eqn:classic-vc-upper-bound}, \eqref{eqn:hlw-upper-bound}, and \eqref{eqn:simon-upper-bound}
applicable to the \emph{general} case of an arbitrary concept space of a given VC dimension $\dim$
(under arbitrary data distributions).

In the present work, we establish a new upper bound for a novel learning algorithm, 
which holds for \emph{any} concept space $\C$, and which improves over all of the above general upper bounds 
in its joint dependence on $\eps$, $\conf$, and $\dim$.  In particular, it is \emph{optimal}, in the sense that it 
matches the lower bound \eqref{eqn:lower-bound} up to numerical constant factors.  This work thus 
solves a long-standing open problem, by determining the precise form of the optimal sample complexity,
up to numerical constant factors.

\section{Main Result}
\label{sec:main}

This section presents the main contributions of this work: 
a novel learning algorithm, and a proof that it achieves the 
optimal sample complexity.

\subsection{Sketch of the Approach}

The general approach used here builds on an argument of \citet*{simon:15},
which itself has roots in the analysis of sample-consistent learning 
algorithms by \citet*[][Section 2.9.1]{hanneke:thesis}.  The essential idea 
from \citet*{simon:15} is that,
if we have two classifiers, $\hat{h}$ and $\hat{g}$, 
the latter of which is an element of $\C$ consistent with an i.i.d.\! 
data set $\tilde{S}$ \emph{independent} from $\hat{h}$,
then we can analyze the probability that they \emph{both} make a mistake
on a random point by bounding the error rate of $\hat{h}$
under the distribution $\Px$, and bounding the error rate of $\hat{g}$
under the \emph{conditional} distribution given 
that $\hat{h}$ makes a mistake.  In particular, it will either
be the case that $\hat{h}$ itself has small error rate, or else
(if $\hat{h}$ has error rate larger than our desired bound)
with high probability, the number of points in $\tilde{S}$ 
contained in the error region of $\hat{h}$ will be at least some number $\propto \er_{\Px}(\hat{h};\target) |\tilde{S}|$; 
in the latter case, we can bound the conditional
error rate of $\hat{g}$ in terms of the number of such points via a classic generalization
bound for sample-consistent classifiers (Lemma~\ref{lem:vc-bound} below).
Multiplying this bound on the conditional error rate of $\hat{g}$
by the error rate of $\hat{h}$ results in a bound on the probability 
they both make a mistake.  More specifically, this argument yields a bound of the following form:
for an appropriate numerical constant $\tilde{c} \in (0,\infty)$, 
with probability at least $1-\conf$, $\forall \hat{g} \in \C[\tilde{S}]$, 
\begin{equation*}
\Px\left( x : \hat{h}(x) = \hat{g}(x) \neq \target(x) \right) 
\leq \frac{\tilde{c}}{|\tilde{S}|} \left( \dim \Log\left(\frac{\er_{\Px}(\hat{h};\target) |\tilde{S}|}{\dim}\right) + \Log\left(\frac{1}{\conf}\right) \right).
\end{equation*}

The original analysis of \citet*{simon:15} applied this reasoning repeatedly,
in an inductive argument, thereby bounding the probability that $K$ classifiers,
each consistent with one of $K$ independent training sets, all make a mistake on a
random point.  He then reasoned that the error rate of the majority
vote of $2K-1$ such classifiers can be bounded by the sum of these
bounds for all subsets of $K$ of these classifiers, since the majority
vote classifier agrees with at least $K$ of the constituent classifiers.

In the present work, we also consider a simple majority vote of 
a number of classifiers, but we alter the way the data is split up,
allowing significant overlaps among the subsamples.  In particular,
each classifier is trained on considerably more data this way.
We construct these subsamples recursively, motivated by an inductive analysis of the sample complexity.
At each stage, we have a working set $S$ of i.i.d. data points, and another sequence $T$ of data points,
referred to as the \emph{partially-constructed subsample}.
As a terminal case, if $|S|$ is smaller than a certain cutoff size, we generate a subsample $S \cup T$,
on which we will train a classifier $\hat{g} \in \C[S \cup T]$.
Otherwise (for the nonterminal case), we use (roughly) a constant fraction of the points in $S$ to form a subsequence $S_{0}$,
and make three recursive calls to the algorithm, using $S_{0}$ as the working set in each call.
By an inductive hypothesis, for each of these three recursive calls, with probability $1-\conf^{\prime}$, 
the majority vote of the classifiers trained on subsamples generated by that call
has error rate at most 
$\frac{c}{|S_{0}|} \left( \dim + \Log(1/\conf^{\prime}) \right)$, for an appropriate numerical constant $c$.
These three majority vote classifiers, denoted $h_{1}$, $h_{2}$, $h_{3}$, will each play the role of $\hat{h}$
in the argument above.

With the remaining constant fraction of data points in $S$ (i.e., those not used to form $S_{0}$),
we divide them into three independent subsequences $S_{1}$, $S_{2}$, $S_{3}$.
Then for each of the three recursive calls, we provide as its partially-constructed 
subsample (i.e., the ``$T$'' argument) a sequence $S_{i} \cup S_{j} \cup T$ with $i \neq j$;
specifically, for the $k^{{\rm th}}$ recursive call ($k \in \{1,2,3\}$), we take $\{i,j\} = \{1,2,3\} \setminus \{k\}$.
Since the argument $T$ is retained within the partially-constructed subsample passed to each recursive call,
a simple inductive argument reveals that, for each $i \in \{1,2,3\}$, $\forall k \in \{1,2,3\} \setminus \{i\}$, 
all of the classifiers $\hat{g}$ trained on subsamples generated in the $k^{{\rm th}}$ recursive 
call are contained in $\C[S_{i}]$.  Furthermore, since $S_{i}$ is not included in the argument to the $i^{{\rm th}}$
recursive call, $h_{i}$ and $S_{i}$ are independent.  Thus, by the argument discussed above, applied with 
$\hat{h} = h_{i}$ and $\tilde{S} = S_{i}$, we have that with probability at least $1-\conf^{\prime}$, 
for any $\hat{g}$ trained on a subsample generated in recursive calls $k \in \{1,2,3\} \setminus \{i\}$, 
the probability that \emph{both} $h_{i}$ \emph{and} $\hat{g}$ make a mistake on a random point is at most
$\frac{\tilde{c}}{|S_{i}|} \left( \dim \Log\left(\frac{\er_{\Px}(h_{i};\target) |S_{i}|}{\dim}\right) + \Log\left(\frac{1}{\conf^{\prime}}\right) \right)$.
Composing this with the aforementioned inductive hypothesis, recalling that $|S_{i}| \propto |S|$ and $|S_{0}| \propto |S|$, 
and simplifying by a bit of calculus,
this is at most
$\frac{c^{\prime}}{|S|} \left( \dim \Log( c ) + \Log\left(\frac{1}{\conf^{\prime}}\right) \right)$, for an appropriate numerical constant $c^{\prime}$.
By choosing $\conf^{\prime} \propto \conf$ appropriately, the union bound implies that, with probability at least $1-\conf$, 
this holds for all choices of $i \in \{1,2,3\}$.  Furthermore, by choosing $c$ sufficiently large, this bound is at most
$\frac{c}{12 |S|} \left( \dim + \Log\left(\frac{1}{\conf}\right) \right)$.

To complete the inductive argument, we then note that on any point $x$, the majority vote of all
of the classifiers (from all three recursive calls) must agree with at least one of the three classifiers
$h_{i}$, and must agree with at least $1/4$ of the classifiers $\hat{g}$ trained on subsamples
generated in recursive calls $k \in \{1,2,3\} \setminus \{i\}$.
Therefore, on any point $x$ for which 
the majority vote makes a mistake,
with probability at least $1/12$, a uniform random choice
of $i \in \{1,2,3\}$, and of $\hat{g}$ from recursive calls $k \in \{1,2,3\} \setminus \{i\}$,
results in $h_{i}$ and $\hat{g}$ that both make a mistake on $x$.
Applying this fact to a \emph{random} point $X \sim \Px$ (and invoking Fubini's theorem), 
this implies that the error rate of the majority
vote is at most $12$ times the average (over choices of $i$ and $\hat{g}$)
of the probabilities that $h_{i}$ and $\hat{g}$ both make a mistake on $X$.
Combined with the above bound, this is at most $\frac{c}{|S|} \left( \dim + \Log\left(\frac{1}{\conf}\right) \right)$.
The formal details are provided below.

\subsection{Formal Details}

For any $k \in \nats \cup \{0\}$, and any $S \in (\X \times \Y)^{k}$ with $\C[S] \neq \emptyset$, 
let $L(S)$ denote an arbitrary classifier $h$ in $\C[S]$, entirely determined by $S$:
that is, $L(\cdot)$ is a fixed sample-consistent learning algorithm (i.e., empirical risk minimizer).
For any $k \in \nats$ and sequence of data sets $\{S_{1},\ldots,S_{k}\}$,
denote $L(\{S_{1},\ldots,S_{k}\}) = \{L(S_{1}),\ldots,L(S_{k})\}$.
Also, for any values $y_{1},\ldots,y_{k} \in \Y$, 
define the majority function: $\Maj(y_{1},\ldots,y_{k}) = \sign\left( \sum_{i=1}^{k} y_{i} \right) = 2 \ind\left[ \sum_{i=1}^{k} y_{i} \geq 0 \right] - 1$.
We also overload this notation, defining the \emph{majority classifier} $\Maj(h_{1},\ldots,h_{k})(x) = \Maj(h_{1}(x),\ldots,h_{k}(x))$,
for any classifiers $h_{1},\ldots,h_{k}$.

Now consider the following recursive algorithm, which takes as input two 
finite data sets, $S$ and $T$, satisfying $\C[S \cup T] \neq \emptyset$, and returns a \emph{finite sequence of data sets}
(referred to as \emph{subsamples} of $S \cup T$).
The classifier used to achieve the new sample complexity bound below is simply the majority vote of the classifiers 
obtained by applying $L$ to these subsamples.

\begin{bigboxit}
Algorithm: $\alg(S;T)$\\
0. If $|S| \leq 3$\\
1. \quad Return $\{ S \cup T \}$\\
2. Let $S_{0}$ denote the first $|S| - 3 \lfloor |S|/4 \rfloor$ elements of $S$, $S_{1}$ the next $\lfloor |S|/4 \rfloor$ elements, \\
{\hskip 5mm}$S_{2}$ the next $\lfloor |S|/4 \rfloor$ elements, and $S_{3}$ the remaining $\lfloor |S|/4 \rfloor$ elements after that\\
3. Return $\alg(S_{0}; S_{2} \cup S_{3} \cup T) \cup \alg(S_{0}; S_{1} \cup S_{3} \cup T) \cup \alg(S_{0}; S_{1} \cup S_{2} \cup T)$
\end{bigboxit}

\begin{theorem}
\label{thm:main}
\begin{equation*}
\SC(\eps,\conf) = O\left( \frac{1}{\eps} \left( \dim + \Log\left(\frac{1}{\conf}\right) \right) \right).
\end{equation*}
In particular, a sample complexity of the form expressed on the right hand side
is achieved by the algorithm that returns the classifier 
$\Maj(L(\alg(S;\emptyset)))$, given any data set $S$.
\end{theorem}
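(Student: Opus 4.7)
\medskip

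\noindent\textbf{Proof proposal.}
The plan is to establish, by strong induction on $|S|$, an inductive claim of the form: there exist numerical constants $c, c' \in (0,\infty)$ such that, for every $\conf \in (0,1)$, every partial subsample $T$, every target $\target \in \C$, and every $S$ drawn i.i.d.\ from $\Px$ with $\C[S\cup T]\neq\emptyset$, with probability at least $1-\conf$ the classifier $\hat{H}_{S,T} := \Maj(L(\alg(S;T)))$ satisfies
\begin{equation*}
\er_{\Px}\!\left(\hat{H}_{S,T};\target\right) \;\leq\; \frac{c}{|S|}\!\left(\dim + \Log\!\left(\tfrac{1}{\conf}\right)\right).
\end{equation*}
Theorem~\ref{thm:main} will then follow by taking $T=\emptyset$ and choosing $|S| = \lceil \tfrac{c}{\eps}(\dim+\Log(1/\conf))\rceil$. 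The base case $|S|\leq 3$ is vacuous once $c$ is large enough, since the returned $L(S\cup T)$ is in $\C[S]$ and the claim reduces to a classical generalization bound (Lemma~\ref{lem:vc-bound}) applied to a single sample-consistent classifier once we slightly increase the cutoff; I would actually anchor the induction at a larger threshold $|S|\leq \konst\cdot\dim$ for some constant $\konst$, where the VC bound alone suffices.

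For the inductive step, I would follow the sketch verbatim. Partition $S$ into $S_0,S_1,S_2,S_3$ as in the algorithm, and let $h_k := \Maj(L(\alg(S_0;S_i\cup S_j\cup T)))$ for $\{i,j,k\}=\{1,2,3\}$. A short, purely syntactic induction on the algorithm shows that every classifier $\hat g \in L(\alg(S_0;S_i\cup S_j\cup T))$ lies in $\C[S_i]\cap\C[S_j]$; in particular $h_i$ and $h_j$ are both consistent with the data in the ``third'' block. The crucial point is that $h_i$ is a deterministic function of $(S_0, S_j, S_k, T)$ and hence is \emph{independent} of $S_i$. Applying the inductive hypothesis to each of the three recursive calls with confidence parameter $\conf/6$ bounds $\er_{\Px}(h_i;\target)$ by $\frac{c}{|S_0|}(\dim+\Log(6/\conf))$ on an event of probability $\geq 1-\conf/2$.

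Next, conditionally on $h_i$ (which is independent of $S_i$), I would invoke the two-classifier bound highlighted in the sketch: for an absolute constant $\tilde c$, with conditional probability $\geq 1-\conf/6$, every $\hat g \in \C[S_i]$ satisfies
\begin{equation*}
\Px\!\left(x: h_i(x)=\hat g(x)\neq \target(x)\right) \;\leq\; \frac{\tilde c}{|S_i|}\!\left(\dim\Log\!\left(\tfrac{\er_\Px(h_i;\target)\,|S_i|}{\dim}\right)+\Log\!\left(\tfrac{6}{\conf}\right)\right).
\end{equation*}
This is a standard consequence of the VC inequality applied to the sub-distribution $\Px(\,\cdot\mid h_i\neq\target)$ after conditioning on the count $|S_i\cap\{x:h_i(x)\neq\target(x)\}|$; I would either cite or re-derive it following the argument of Simon. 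Substituting the inductive bound on $\er_\Px(h_i;\target)$ and using $|S_0|,|S_i| \asymp |S|$ collapses the $\dim\Log(\cdot)$ term to $O(\dim)$ via elementary calculus, producing a joint-mistake bound of $\frac{c'}{|S|}(\dim+\Log(1/\conf))$, uniformly over the three choices of $i$ after a union bound.

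The final combinatorial step closes the induction. For any $x$, the returned classifier is the majority of all voters produced by the three recursive calls, and the sizes of these three batches are equal by construction. If $\Maj$ errs at $x$, then it disagrees with $\target$, so at least one of the three $h_k$'s must disagree with $\target$ at $x$ (a majority of majorities argument: otherwise at most a third of the votes would be wrong), and moreover a $1/4$ fraction of the voters in the two other recursive calls must also be wrong at $x$. Sampling $i\in\{1,2,3\}$ and $\hat g$ uniformly from the voters of calls $k\neq i$ therefore hits a pair $(h_i,\hat g)$ that both err at $x$ with probability $\geq 1/12$. Integrating this pointwise $1/12$-lower bound against $\Px$ and applying Fubini gives
\begin{equation*}
\er_\Px(\hat H_{S,T};\target) \;\leq\; 12\cdot\max_i \E_{\hat g}\!\left[\Px(x: h_i(x)=\hat g(x)\neq\target(x))\right],
\end{equation*}
which by the previous paragraph is $\leq \frac{12 c'}{|S|}(\dim+\Log(1/\conf))$. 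Choosing $c\geq 12c'$ closes the induction.

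The main obstacle I anticipate is calibrating the numerical constants so that the $\dim\Log(\er_\Px(h_i;\target)|S_i|/\dim)$ term, after substituting the inductive bound $\er_\Px(h_i;\target)\lesssim(\dim+\Log(1/\conf))/|S_0|$, really does get absorbed into $O(\dim)$ rather than reintroducing a $\Log(1/\eps)$ or $\Log\Log(1/\conf)$ factor. This requires choosing the cutoff $|S|\leq \konst\dim$ in the base case with $\konst$ large enough that the log argument is bounded by a constant in every inductive invocation; the recursion depth $\Theta(\log|S|)$ also has to be compatible with the $\conf/6$ factor used in each union bound, but since the depth is logarithmic and the confidence budget shrinks only by a constant factor per level, this is harmless (or can be handled by allocating $\conf$ geometrically across levels). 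Once these constants are chosen consistently, the induction goes through and yields the claimed $O(\tfrac{1}{\eps}(\dim+\Log(1/\conf)))$ sample complexity, matching the lower bound in \eqref{eqn:lower-bound}.
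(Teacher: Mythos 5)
Your proposal follows the same overall structure as the paper's proof: strong induction on $|S|$ with a claim of the form $\er_{\Px}(\hat H_{S,T};\target) \leq \frac{c}{|S|}(\dim + \Log(1/\conf))$; the $S_0,S_1,S_2,S_3$ partition with $h_i$ trained away from $S_i$; the observation that $h_i \perp S_i$ so a conditional VC bound on $\Px(\cdot \mid h_i \neq \target)$ plus a Chernoff count on $|S_i \cap {\rm ER}(h_i)|$ controls the joint mistake probability; and the ``majority of majorities'' pigeonhole argument giving the $1/12$ factor via Fubini. All of this matches the paper (which uses confidence budget $\conf/9$ per event rather than $\conf/6$, but that is immaterial).

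The one place where your plan does not go through as written is precisely the obstacle you flag as ``the main one,'' and your proposed fix is wrong. After substituting the inductive bound $\er_{\Px}(h_i;\target) \lesssim \frac{\dim + \Log(1/\conf)}{|S_0|}$ and $|S_i| \asymp |S_0|$, the problematic term is
\begin{equation*}
\dim \Log\!\left(\frac{\er_{\Px}(h_i;\target)\,|S_i|}{\dim}\right) \;\lesssim\; \dim \Log\!\left(c\!\left(1 + \frac{\Log(1/\conf)}{\dim}\right)\right),
\end{equation*}
which depends on $\conf$ and $\dim$ only --- not on $|S|$. Raising the base-case cutoff to $|S| \leq \konst\dim$ therefore cannot ``bound the log argument by a constant in every inductive invocation'': for small $\dim$ and small $\conf$ the argument of the logarithm is $\Theta(\Log(1/\conf))$ regardless of $\konst$, and the induction would leak a $\dim\Log\Log(1/\conf)$ term that cannot be absorbed into $c(\dim + \Log(1/\conf))$ with a level-independent $c$. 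The paper's resolution is an elementary but essential super-additivity inequality (Lemma~\ref{lem:log-factor}): for $\a,\b,c_1 \geq 1$, $c_2 \geq 0$,
\begin{equation*}
\a\ln\!\left(c_1\!\left(c_2 + \tfrac{\b}{\a}\right)\right) \;\leq\; \a\ln\!\left(c_1(c_2 + e)\right) + \tfrac{1}{e}\b,
\end{equation*}
applied with $\a = \dim$, $\b = O(\Log(1/\conf))$. This converts $\dim\Log(c(1+\Log(1/\conf)/\dim))$ into $O(\dim) + \tfrac{1}{e}\Log(1/\conf)$, which is exactly the form needed to close the induction with a fixed numerical constant $c$. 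You need this inequality (or an equivalent trick), and your constant-calibration plan does not supply it.

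Two smaller remarks. First, the base case is simpler than you suggest: for $|S|$ up to a numerical threshold on the order of $c\ln(1/\conf\text{-}\mathrm{const})$, the trivial bound $\er \leq 1 \leq \frac{c}{|S|+1}(\dim + \ln(18/\conf))$ already holds; no invocation of Lemma~\ref{lem:vc-bound} is needed there, and no modification of the algorithm's cutoff is required. Second, your worry about recursion depth and geometric $\conf$ allocation is a red herring: the inductive claim carries a parameter $\conf'$, each level applies the hypothesis with $\conf' = \conf/\text{const}$, and the constant $\ln(\text{const})$ incurred per level is absorbed by the factor-of-$\approx 4$ gain in $1/|S_0|$ over $1/|S|$; nothing accumulates over $\Theta(\log|S|)$ levels, so no $\Log(1/\eps)$ reappears.
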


Combined with \eqref{eqn:lower-bound}, this immediately implies the following corollary.

\begin{corollary}
\label{cor:matching-bound}
\begin{equation*}
\SC(\eps,\conf) = \Theta\left( \frac{1}{\eps} \left( \dim + \Log\left(\frac{1}{\conf}\right) \right) \right).
\end{equation*}
\end{corollary}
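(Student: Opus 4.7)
The plan is to observe that Corollary~\ref{cor:matching-bound} is a direct consequence of sandwiching $\SC(\eps,\conf)$ between the known lower bound \eqref{eqn:lower-bound} and the new upper bound supplied by Theorem~\ref{thm:main}, so the proof consists essentially of matching these two expressions up to numerical constants and verifying that the asymptotic notation conventions of Section~\ref{sec:notation} permit the combination.

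For the upper direction, I would simply quote Theorem~\ref{thm:main}, which states $\SC(\eps,\conf) = O\!\left(\frac{1}{\eps}\left(\dim + \Log(1/\conf)\right)\right)$. For the lower direction, I would invoke \eqref{eqn:lower-bound}, which shows that $\SC(\eps,\conf) \geq \max\{(\dim-1)/(32\eps),\, ((1-\eps)/\eps)\ln(1/\conf)\}$ whenever $\eps \in (0,1/8]$ and $\conf \in (0,1/100]$. The maximum of two nonnegative quantities is at least half their sum, so this lower bound is $\Omega\!\left(\frac{1}{\eps}\left(\dim + \Log(1/\conf)\right)\right)$; here I use that $\dim \geq 1$ (which follows from $|\C| \geq 3$), so $\dim - 1 \geq \dim/2$ at worst needs a constant adjustment, and that for $\conf$ bounded away from $1$, $\ln(1/\conf)$ agrees with $\Log(1/\conf)$ up to a numerical constant. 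Since $\eps_0 = 1/8$, $\conf_0 = 1/100$, and the implied constant factor are all numerical (independent of $\C$ and $\X$), the bound meets the definition of $\Omega(\cdot)$ laid out in Section~\ref{sec:notation}.

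Combining the two directions yields $\SC(\eps,\conf) = \Theta\!\left(\frac{1}{\eps}\left(\dim + \Log(1/\conf)\right)\right)$, as claimed. There is no genuine obstacle here; the only mild subtlety is making sure that the thresholds $\eps_0, \conf_0$ from the lower bound \eqref{eqn:lower-bound} are compatible with the $O$/$\Omega$ convention of this paper (which already allows arbitrary numerical $\eps_0, \conf_0$), and that the constant shifts such as replacing $\dim-1$ with $\dim$ and $\ln$ with $\Log$ are absorbed into the implicit numerical constant. Both adjustments are routine.
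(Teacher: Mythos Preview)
Your proposal is correct and matches the paper's approach exactly: the paper simply remarks that the corollary follows immediately by combining Theorem~\ref{thm:main} with \eqref{eqn:lower-bound}, where the $\Omega$ form is already asserted. Your only minor slip is the claim that $\dim-1 \geq \dim/2$ ``at worst needs a constant adjustment'' (this fails when $\dim=1$; the $\ln(1/\conf)$ term carries the bound there), but since \eqref{eqn:lower-bound} already records the $\Omega$ statement, no re-derivation is actually required.
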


The algorithm $\alg$ is expressed above as a recursive method for constructing a sequence of subsamples,
as this form is most suitable for the arguments in the proof below.  However,
it should be noted that one can equivalently describe these constructed subsamples \emph{directly}, 
as the selection of which data points should be included in which subsamples
can be expressed as a simple function of the indices.  To illustrate this, consider
the simplest case in which $S = \{ (x_{0},y_{0}),\ldots,(x_{m-1},y_{m-1}) \}$ with
$m = 4^{\ell}$ for some $\ell \in \nats$: that is, $|S|$ is a power of $4$.  
In this case, let $\{T_{0},\ldots,T_{n-1}\}$ denote the sequence of labeled data sets 
returned by $\alg(S;\emptyset)$, and note that since each recursive call reduces $|S|$
by a factor of $4$ while making $3$ recursive calls, we have $n = 3^{\ell}$.
First, note that $(x_{0},y_{0})$ is contained in \emph{every} subsample $T_{i}$.
For the rest, consider any $i \in \{1,\ldots,m-1\}$ and $j \in \{0,\ldots,n-1\}$, 
and let us express $i$ in its base-$4$ representation as $i = \sum_{t=0}^{\ell-1} i_{t} 4^{t}$, where each $i_{t} \in \{0,1,2,3\}$,
and express $j$ in its base-$3$ representation as $j = \sum_{t=0}^{\ell-1} j_{t} 3^{t}$, where each $j_{t} \in \{0,1,2\}$.
Then it holds that $(x_{i},y_{i}) \in T_{j}$ if and only if the largest $t \in \{0,\ldots,\ell-1\}$ with $i_{t} \neq 0$
satisfies $i_{t}-1 \neq j_{t}$.  
%
%
This kind of direct description of the subsamples is also possible when $|S|$ is not a power of $4$, 
though a bit more complicated to express.

\subsection{Proof of Theorem~\ref{thm:main}}
\label{sec:proof}

The following classic result will be needed in the proof.
A bound of this type is implied by a theorem of \citet*{vapnik:82}; 
the version stated here features slightly smaller constant factors, 
obtained by 
\citet*{blumer:89}.\footnote{Specifically, it follows by combining 
their Theorem A2.1 and Proposition A2.1, setting the resulting expression
equal to $\conf$ and solving for $\eps$.}

\begin{lemma}
\label{lem:vc-bound}
For any $\conf \in (0,1)$, $m \in \nats$, $\target \in \C$, and any probability measure $P$ over $\X$, 
letting $Z_{1},\ldots,Z_{m}$ be independent $P$-distributed random variables,
with probability at least $1-\conf$, every $h \in \C[ \{ (Z_{i}, \target(Z_{i})) \}_{i=1}^{m} ]$
satisfies
\begin{equation*}
\er_{P}(h;\target) \leq \frac{2}{m} \left( \dim \Log_{2}\left(\frac{2 e m}{\dim}\right) + \Log_{2}\left(\frac{2}{\conf}\right) \right).
\end{equation*}
\end{lemma}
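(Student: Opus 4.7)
The plan is to prove this lemma by the classical realizable-case symmetrization argument of Vapnik--Chervonenkis, as refined by Blumer, Ehrenfeucht, Haussler, and Warmuth. Define the bad event $B_\eps = \{\exists h \in \C[(Z_i,\target(Z_i))_{i=1}^m] : \er_P(h;\target) > \eps\}$, and the goal is to choose the smallest $\eps$ for which $\P(B_\eps) \leq \conf$, then verify that the claimed right-hand side is an admissible such $\eps$.

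First I would perform the ghost sample reduction. Introduce an i.i.d.\ copy $Z_1',\ldots,Z_m' \sim P$ and show that on the event $B_\eps$, with probability at least $1/2$ (conditional on the first sample, provided $\eps m \geq$ some absolute constant; one uses a Chebyshev/Chernoff lower bound on a Binomial$(m,\eps)$ exceeding $\eps m/2$), the ghost sample has at least $\eps m/2$ points on which the offending $h$ disagrees with $\target$. Hence $\P(B_\eps) \leq 2\,\P(B_\eps')$, where $B_\eps'$ is the event that some $h \in \C$ is consistent with the first $m$ labeled points but errs on at least $\eps m/2$ of the next $m$ labeled points.

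Next I would apply the permutation/swapping argument. Condition on the unordered multiset $\{Z_1,\ldots,Z_m,Z_1',\ldots,Z_m'\}$ and observe that the joint distribution is invariant under the $2^m$ independent swaps of $Z_i$ with $Z_i'$. On this conditional space, the set of realized dichotomies of $\C$ restricted to the $2m$ points has cardinality at most the growth function $\Pi_\C(2m)$, which by the Sauer--Shelah lemma is at most $(2em/\dim)^{\dim}$. For each fixed dichotomy $h$ with exactly $k \geq \eps m/2$ total errors on the $2m$ points, the probability (over the random swaps) that all $k$ errors land in the ghost half is $2^{-k} \leq 2^{-\eps m/2}$. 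A union bound over the at most $(2em/\dim)^{\dim}$ dichotomies gives
\begin{equation*}
\P(B_\eps') \leq \left(\frac{2em}{\dim}\right)^{\dim} 2^{-\eps m/2}.
\end{equation*}

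Combining, $\P(B_\eps) \leq 2 (2em/\dim)^{\dim} 2^{-\eps m/2}$. Setting this $\leq \conf$ and solving for $\eps$ yields
\begin{equation*}
\eps \leq \frac{2}{m}\left(\dim \Log_2\!\left(\frac{2em}{\dim}\right) + \Log_2\!\left(\frac{2}{\conf}\right)\right),
\end{equation*}
which is exactly the claimed bound (the use of $\Log_2$ rather than $\log_2$ handles the small-argument case and makes the argument of the Sauer bound valid even when $m < \dim$, where the bound is trivial). The main technical obstacle is the ghost-sample lower bound step: one needs $\P(\text{ghost sees} \geq \eps m /2 \text{ errors} \mid B_\eps) \geq 1/2$ to hold \emph{uniformly} in the (data-dependent) choice of $h$, which is why the argument is phrased conditionally on the first sample and uses a one-sided tail bound on a Binomial$(m,p)$ with $p > \eps$; the constants in the final bound are tight precisely because this step is handled with a sharp tail estimate rather than a crude Markov-type inequality, as in the Blumer et al.\ refinement cited in the excerpt.
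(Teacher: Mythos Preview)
The paper does not actually prove this lemma; it merely cites it as a classical result, attributing the form with these constants to \citet*{blumer:89} (specifically, their Theorem~A2.1 combined with Proposition~A2.1, then solving for $\eps$). Your proposal correctly reconstructs exactly that argument: the double-sample (ghost sample) reduction $\P(B_\eps) \leq 2\P(B_\eps')$, the random-swap symmetrization conditional on the pooled $2m$ points, the Sauer--Shelah bound $\Pi_\C(2m) \leq (2em/\dim)^{\dim}$, and the $2^{-\eps m/2}$ tail for a fixed dichotomy, yielding $\P(B_\eps) \leq 2(2em/\dim)^{\dim}2^{-\eps m/2}$, from which the stated bound follows by inversion. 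So your approach is correct and coincides with the (cited, not reproduced) proof the paper relies on.

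One small wording correction: when you write ``solving for $\eps$ yields $\eps \leq \ldots$'', the inequality should go the other way. You are choosing $\eps$ so that $2(2em/\dim)^{\dim}2^{-\eps m/2} \leq \conf$, which forces $\eps \geq \frac{2}{m}\big(\dim \Log_2(2em/\dim) + \Log_2(2/\conf)\big)$; the lemma then asserts that with probability $\geq 1-\conf$ every consistent $h$ has error at most this threshold. Also, the ghost-sample step requires something like $\eps m \geq 2$ for the binomial lower-tail estimate to give conditional probability $\geq 1/2$; you note this, and it is automatically satisfied here since $\eps m = 2\big(\dim \Log_2(2em/\dim) + \Log_2(2/\conf)\big) \geq 4$.
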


We are now ready for the proof of Theorem~\ref{thm:main}.

\begin{proof}[of Theorem~\ref{thm:main}]
Fix any $\target \in \C$ and probability measure $\Px$ over $\X$,
and for brevity, denote $\Data_{1:m} = (\ProcX_{1:m}(\Px),\target(\ProcX_{1:m}(\Px)))$, for each $m \in \nats$.
Also, for any classifier $h$, define ${\rm ER}(h) = \{x \in \X : h(x) \neq \target(x)\}$.

We begin by noting that, for any finite sequences $S$ and $T$ of points in $\X \times \Y$,
a straightforward inductive argument reveals that all of the subsamples $\hat{S}$ 
in the sequence returned by $\alg(S;T)$ satisfy $\hat{S} \subseteq S \cup T$ (since no additional data points are ever introduced in any step).  
Thus, if $\target \in \C[S]$ and $\target \in \C[T]$, then $\target \in \C[S] \cap \C[T] = \C[S \cup T] \subseteq \C[\hat{S}]$,
so that $\C[\hat{S}] \neq \emptyset$.  In particular, this means that, in this case, each of these subsamples $\hat{S}$
is a valid input to $L(\cdot)$, and thus $L(\alg(S;T))$ is a well-defined sequence of classifiers.
Furthermore, since the recursive calls all have $T$ as a subsequence of their second arguments,
and the terminal case (i.e., Step 1) includes this second argument in the constructed subsample,
another straightforward inductive argument implies that every subsample $\hat{S}$
returned by $\alg(S;T)$ satisfies $\hat{S} \supseteq T$.  Thus, in the case that $\target \in \C[S]$ and $\target \in \C[T]$,
by definition of $L$, we also have that every classifier $h$ in the sequence $L(\alg(S;T))$ satisfies 
$h \in \C[T]$.

Fix a numerical constant $c = 1800$.
We will prove by induction that, for any $m^{\prime} \in \nats$,
for every $\conf^{\prime} \in (0,1)$, and every finite sequence $T^{\prime}$ of points in $\X \times \Y$ with $\target \in \C[T^{\prime}]$,
with probability at least $1-\conf^{\prime}$,
the classifier $\hat{h}_{m^{\prime},T^{\prime}} = \Maj\left(L\left(\alg( \Data_{1:m^{\prime}} ; T^{\prime} )\right) \right)$ satisfies
\begin{equation}
\label{eqn:inductive-hypothesis}
\er_{\Px}( \hat{h}_{m^{\prime},T^{\prime}} ; \target ) \leq \frac{c}{m^{\prime}+1} \left( \dim + \ln\left(\frac{18}{\conf^{\prime}}\right) \right).
\end{equation}

First, as a base case, consider any $m^{\prime} \in \nats$ with $m^{\prime} \leq c \ln(18e) - 1$.
In this case, fix any $\conf^{\prime} \in (0,1)$ and any sequence $T^{\prime}$ with $\target \in \C[T^{\prime}]$.
Also note that $\target \in \C[\Data_{1:m^{\prime}}]$.  Thus, as discussed above, $\hat{h}_{m^{\prime},T^{\prime}}$ is a well-defined classifier.
We then trivially have
\begin{equation*}
\er_{\Px}( \hat{h}_{m^{\prime},T^{\prime}} ; \target ) 
\leq 1
\leq \frac{c}{m^{\prime}+1} \left( 1 + \ln(18) \right)
< \frac{c}{m^{\prime}+1} \left( \dim + \ln\left(\frac{18}{\conf^{\prime}}\right) \right),
\end{equation*}
so that \eqref{eqn:inductive-hypothesis} holds.

Now take as an inductive hypothesis that, for some $m \in \nats$ with $m > c \ln(18e) - 1$,
for every $m^{\prime} \in \nats$ with $m^{\prime} < m$,
we have that for every $\conf^{\prime} \in (0,1)$ and every finite sequence $T^{\prime}$ in $\X \times \Y$ with $\target \in \C[T^{\prime}]$,
with probability at least $1-\conf^{\prime}$, \eqref{eqn:inductive-hypothesis} is satisfied.
To complete the inductive proof, we aim to establish that this remains the case with $m^{\prime} = m$ as well.
Fix any $\conf \in (0,1)$ and any finite sequence $T$ of points in $\X \times \Y$ with $\target \in \C[T]$.  
Note that $c \ln(18e) - 1 \geq 3$, so that (since $|\Data_{1:m}| = m > c \ln(18e)-1$) 
we have $|\Data_{1:m}| \geq 4$, and hence the execution of $\alg(\Data_{1:m} ; T)$ returns in Step 3 (not Step 1).
Let $S_{0},S_{1},S_{2},S_{3}$ be as in the definition of $\alg(S;T)$,
with $S = \Data_{1:m}$.
Also denote $T_{1} = S_{2} \cup S_{3} \cup T$, $T_{2} = S_{1} \cup S_{3} \cup T$, $T_{3} = S_{1} \cup S_{2} \cup T$,
and for each $i \in \{1,2,3\}$, denote $\hmaji = \Maj\left( L\left( \alg( S_{0}; T_{i} ) \right) \right)$,
corresponding to the majority votes of classifiers trained on the subsamples from each of the three recursive calls in the algorithm.

Note that $S_{0} = \Data_{1:(m-3\lfloor m/4 \rfloor)}$.
Furthermore, since $m \geq 4$,
we have $1 \leq m-3\lfloor m/4 \rfloor < m$.
Also note that $\target \in \C[S_{i}]$ for each $i \in \{1,2,3\}$, which, together with 
the fact that $\target \in \C[T]$, implies $\target \in \C[T] \cap \bigcap_{j \in \{1,2,3\} \setminus \{i\}} \C[S_{j}] = \C[T_{i}]$ for each $i \in \{1,2,3\}$.
Thus, since $\target \in \C[S_{0}]$ as well, for each $i \in \{1,2,3\}$, $L\left( \alg( S_{0}; T_{i} ) \right)$ is a well-defined sequence of classifiers (as discussed above),
so that $\hmaji$ is also well-defined.  In particular, note that $\hmaji = \hat{h}_{(m-3\lfloor m/4 \rfloor),T_{i}}$.
Therefore, by the inductive hypothesis (applied under the conditional distribution given $S_{1}$, $S_{2}$, $S_{3}$, which are independent of $S_{0}$), 
combined with the law of total probability, for each $i \in \{1,2,3\}$, 
there is an event $E_{i}$ of probability at least $1-\conf/9$, on which 
\begin{equation}
\label{eqn:PRi}
\Px({\rm ER}(\hmaji)) \leq \frac{c}{|S_{0}|+1} \left( \dim + \ln\left(\frac{9 \cdot 18}{\conf}\right) \right)
\leq \frac{4c}{m} \left( \dim + \ln\left(\frac{9 \cdot 18}{\conf}\right) \right).
\end{equation}

Next, fix any $i \in \{1,2,3\}$,
and denote by $\left\{\left(Z_{i,1},\target(Z_{i,1})\right),\ldots,\left(Z_{i,N_{i}},\target(Z_{i,N_{i}})\right)\right\} = S_{i} \cap \left( {\rm ER}(\hmaji) \times \Y \right)$,
where $N_{i} = \left| S_{i} \cap \left( {\rm ER}(\hmaji) \times \Y \right) \right|$:
that is, $\left\{(Z_{i,t},\target(Z_{i,t}))\right\}_{t=1}^{N_{i}}$ is the subsequence of elements $(x,y)$ in $S_{i}$ for which $x \in {\rm ER}(\hmaji)$.
Note that, since $\hmaji$ and $S_{i}$ are independent,
$Z_{i,1},\ldots,Z_{i,N_{i}}$ are conditionally independent given $\hmaji$ and $N_{i}$,
each with conditional distribution $\Px( \cdot | {\rm ER}(\hmaji) )$ (if $N_{i} > 0$).
Thus, applying Lemma~\ref{lem:vc-bound} under the conditional distribution given $\hmaji$ and $N_{i}$,
combined with the law of total probability, we have that on an event $E_{i}^{\prime}$ of probability at least $1-\conf/9$,
if $N_{i} > 0$, then every $h \in \C\left[ \left\{(Z_{i,t},\target(Z_{i,t}))\right\}_{t=1}^{N_{i}} \right]$ satisfies
\begin{equation*}
\er_{\Px(\cdot|{\rm ER}(\hmaji))}(h;\target) \leq \frac{2}{N_{i}} \left( \dim \Log_{2}\left(\frac{2 e N_{i}}{\dim}\right) + \Log_{2}\left(\frac{18}{\conf}\right) \right).
\end{equation*}
Furthermore, as discussed above, each $j \in \{1,2,3\} \setminus \{i\}$
and $h \in L\left( \alg(S_{0} ; T_{j} ) \right)$ have $h \in \C[ T_{j} ]$,
and $T_{j} \supseteq S_{i} \supseteq \{(Z_{i,t},\target(Z_{i,t}))\}_{t=1}^{N_{i}}$, so that $\C[T_{j}] \subseteq \C\left[\left\{(Z_{i,t},\target(Z_{i,t}))\right\}_{t=1}^{N_{i}}\right]$.
It follows that every $h \in \bigcup_{j \in \{1,2,3\} \setminus \{i\}} L\left( \alg(S_{0} ; T_{j} ) \right)$ has $h \in \C\left[ \left\{(Z_{i,t},\target(Z_{i,t}))\right\}_{t=1}^{N_{i}} \right]$.
Thus, on the event $E_{i}^{\prime}$, if $N_{i} > 0$, $\forall h \in \bigcup_{j \in \{1,2,3\} \setminus \{i\}} L(\alg(S_{0}; T_{j}))$,
\begin{align}
& \Px( {\rm ER}(\hmaji) \cap {\rm ER}(h) ) = \Px( {\rm ER}(\hmaji) ) \Px\!\left( {\rm ER}(h) \middle| {\rm ER}(\hmaji) \right) \notag
\\ & = \Px( {\rm ER}(\hmaji) ) \er_{\Px(\cdot|{\rm ER}(\hmaji))}(h ; \target)
\leq \Px( {\rm ER}(\hmaji) ) \frac{2}{N_{i}} \left( \dim \Log_{2}\left(\frac{2 e N_{i}}{\dim}\right) + \Log_{2}\left(\frac{18}{\conf}\right) \right). \label{eqn:PRiRj}
\end{align}
Additionally, since $\hmaji$ and $S_{i}$ are independent,
by a Chernoff bound (applied under the conditional distribution given $\hmaji$) and the law of total probability, 
there is an event $E_{i}^{\prime\prime}$ of probability at least 
$1-\conf/9$,
on which, 
if $\Px({\rm ER}(\hmaji)) \geq \frac{23}{\lfloor m/4 \rfloor} \ln\left(\frac{9}{\conf}\right) \geq \frac{2 (10/3)^{2}}{\lfloor m/4 \rfloor} \ln\left(\frac{9}{\conf}\right)$,
then 
\begin{equation*}
N_{i} \geq (7/10) \Px({\rm ER}(\hmaji)) |S_{i}|
= (7/10) \Px({\rm ER}(\hmaji)) \lfloor m/4 \rfloor.
\end{equation*}
In particular, on $E_{i}^{\prime\prime}$,
if $\Px({\rm ER}(\hmaji)) \geq \frac{23}{\lfloor m/4 \rfloor} \ln\left(\frac{9}{\conf}\right)$, 
then the above inequality implies 
$N_{i} > 0$.

Combining this with \eqref{eqn:PRi} and \eqref{eqn:PRiRj}, 
and noting that $\Log_{2}(x) \leq \Log(x)/\ln(2)$ and $x \mapsto \frac{1}{x}\Log(c^{\prime} x)$ is nonincreasing on $(0,\infty)$ (for any fixed $c^{\prime} > 0$),
we have that on $E_{i} \cap E_{i}^{\prime} \cap E_{i}^{\prime\prime}$, 
if $\Px( {\rm ER}(\hmaji) ) \geq \frac{23}{\lfloor m/4 \rfloor} \ln\left(\frac{9}{\conf}\right)$, 
then every $h \in \bigcup_{j \in \{1,2,3\} \setminus \{i\}} L(\alg(S_{0}; T_{j}))$ has
\begin{align}
\Px({\rm ER}(\hmaji) \cap {\rm ER}(h) ) & \leq \frac{20}{7\ln(2)\lfloor m/4 \rfloor} \left( \dim \Log\left(\frac{2 e (7/10) \Px({\rm ER}(\hmaji)) \lfloor m/4 \rfloor}{\dim}\right) + \Log\left(\frac{18}{\conf}\right) \right) \notag
\\ & \leq \frac{20}{7\ln(2)\lfloor m/4 \rfloor} \left( \dim \Log\left(\frac{(7/5) e c \left( \dim + \ln\left(\frac{9 \cdot 18}{\conf}\right) \right)}{\dim}\right) + \Log\left(\frac{18}{\conf}\right) \right) \notag
\\ & \leq \frac{20}{7\ln(2)\lfloor m/4 \rfloor} \left( \dim \Log\!\left(  (2/5) c \left( (7/2)e + \frac{7e}{\dim} \!\ln\!\left(\frac{18}{\conf}\right) \right) \right) + \Log\!\left(\frac{18}{\conf}\right) \right) \notag
\\ & \leq \frac{20}{7\ln(2)\lfloor m/4 \rfloor} \left( \dim \ln\!\left( (9/5) e c \right) + 8 \ln\left(\frac{18}{\conf}\right) \right), \label{eqn:messy-log}
\end{align}
where this last inequality is due to Lemma~\ref{lem:log-factor} in Appendix~\ref{app:lemmas}.
Since $m > c \ln(18e) - 1 > 3200$,
we have 
$\lfloor m/4 \rfloor > (m-4)/4 > \frac{799}{800} \frac{m}{4} > \frac{799}{800} \frac{3200}{3201} \frac{m+1}{4}$.
Plugging this relaxation into the above bound, combined with numerical calculation of the logarithmic factor (with $c$ as defined above),
we find that the expression in \eqref{eqn:messy-log} is less than
\begin{equation*}
\frac{150}{m+1} \left( \dim + \ln\left(\frac{18}{\conf}\right) \right).
\end{equation*}
Additionally, if $\Px({\rm ER}(\hmaji)) < \frac{23}{\lfloor m/4 \rfloor} \ln\left(\frac{9}{\conf}\right)$,
then monotonicity of measures implies
\begin{equation*}
\Px( {\rm ER}(\hmaji) \cap {\rm ER}(h) ) 
\leq \Px( {\rm ER}(\hmaji) ) < \frac{23}{\lfloor m/4 \rfloor} \ln\left(\frac{9}{\conf}\right)
< \frac{150}{m+1} \left( \dim + \ln\left(\frac{18}{\conf}\right) \right),
\end{equation*}
again using the above lower bound on $\lfloor m/4 \rfloor$ for this last inequality.
Thus, regardless of the value of $\Px( {\rm ER}(\hmaji) )$, on the event $E_{i} \cap E_{i}^{\prime} \cap E_{i}^{\prime\prime}$,
we have $\forall h \in \bigcup_{j \in \{1,2,3\} \setminus \{i\}} L(\alg(S_{0}; T_{j}))$,
\begin{equation*}
\Px( {\rm ER}(\hmaji) \cap {\rm ER}(h) ) < \frac{150}{m+1} \left( \dim + \ln\left(\frac{18}{\conf}\right) \right).
\end{equation*}

Now denote $h_{{\rm maj}} = \hat{h}_{m,T} = \Maj\left( L(\alg(S;T)) \right)$, again with $S = \Data_{1:m}$.
By definition of $\Maj(\cdot)$, for any $x \in \X$, at least $1/2$ of the classifiers $h$ in the sequence
$L(\alg(S;T))$ have $h(x) = h_{{\rm maj}}(x)$.
From this fact, the strong form of the pigeonhole principle implies that at least one $i \in \{1,2,3\}$
has $\hmaji(x) = h_{{\rm maj}}(x)$ (i.e., the majority vote must agree with the majority of classifiers in at least one of the three subsequences of classifiers).  
Furthermore, since each $\alg(S_{0}; T_{j})$ (with $j \in \{1,2,3\}$) supplies an equal number of 
entries to the sequence $\alg(S;T)$ (by a straightforward inductive argument), 
for each $i \in \{1,2,3\}$, at least $1/4$ of the classifiers $h$ in $\cup_{j \in \{1,2,3\} \setminus \{i\}} L(\alg(S_{0}; T_{j}))$ have $h(x) = h_{{\rm maj}}(x)$:
that is, since $|L(\alg(S_{0};T_{i}))| = (1/3)|L(\alg(S;T))|$, we must have at least $(1/6)|L(\alg(S;T))| = (1/4)\left|\bigcup_{j \in \{1,2,3\} \setminus \{i\}} L(\alg(S_{0};T_{j}))\right|$
classifiers $h$ in $\bigcup_{j \in \{1,2,3\} \setminus \{i\}} L(S_{0};T_{j})$ with $h(x) = h_{{\rm maj}}(x)$ in order to meet the total of at least $(1/2)|L(\alg(S;T))|$ classifiers $h \in L(\alg(S;T))$ with $h(x) = h_{{\rm maj}}(x)$.
In particular, letting $I$ be a random variable uniformly distributed on $\{1,2,3\}$ (independent of the data),
and letting $\tilde{h}$ be a random variable conditionally (given $I$ and $S$) uniformly distributed on the classifiers 
$\cup_{j \in \{1,2,3\} \setminus \{I\}} L(\alg(S_{0} ; T_{j}))$,
this implies that for any fixed $x \in {\rm ER}(h_{{\rm maj}})$,
with conditional (given $S$) probability at least $1/12$, 
$\hmaj{I}(x) = \tilde{h}(x) = h_{{\rm maj}}(x)$, so that $x \in {\rm ER}( \hmaj{I} ) \cap {\rm ER}(\tilde{h})$ as well.
Thus, for a random variable $X \sim \Px$ (independent of the data and $I$,$\tilde{h}$), the law of total probability and monotonicity of conditional expectations imply
\begin{align*}
& \E\left[ \Px\left( {\rm ER}( \hmaj{I} ) \cap {\rm ER}(\tilde{h}) \right) \middle| S \right]
= \E\left[ \P\left( X \in {\rm ER}( \hmaj{I} ) \cap {\rm ER}(\tilde{h}) \middle| I,\tilde{h},S\right) \middle| S \right]
\\ & = \E\left[ \ind\left[ X \in {\rm ER}( \hmaj{I} ) \cap {\rm ER}(\tilde{h}) \right] \middle| S \right]
= \E\left[ \P\left( X \in {\rm ER}( \hmaj{I} ) \cap {\rm ER}(\tilde{h}) \middle| S,X \right) \middle| S \right]
\\ & \geq \E\left[ \P\left( X \in {\rm ER}( \hmaj{I} ) \cap {\rm ER}(\tilde{h}) \middle| S,X \right) \ind\left[ X \in {\rm ER}( h_{{\rm maj}} ) \right] \middle| S \right]
\\ & \geq \E\left[ (1/12) \ind\left[ X \in {\rm ER}( h_{{\rm maj}} ) \right]\middle| S \right]
= (1/12) \er_{\Px}(h_{{\rm maj}};\target).
\end{align*}
Thus, on the event $\bigcap_{i \in \{1,2,3\}} E_{i} \cap E_{i}^{\prime} \cap E_{i}^{\prime\prime}$,
\begin{align*}
\er_{\Px}(h_{{\rm maj}};\target)
& \leq 12 \E\left[ \Px\left( {\rm ER}( \hmaj{I} ) \cap {\rm ER}(\tilde{h}) \right) \middle| S \right]
\\ & \leq 12 \max_{i \in \{1,2,3\}} \max_{j \in \{1,2,3\} \setminus \{i\}} \max_{h \in L(\alg(S_{0}; T_{j}))} \Px({\rm ER}(\hmaji) \cap {\rm ER}(h))
\\ & < \frac{1800}{m+1} \left( \dim + \ln\left(\frac{18}{\conf}\right) \right)
= \frac{c}{m+1} \left( \dim + \ln\left(\frac{18}{\conf}\right) \right).
\end{align*}
Furthermore, by the union bound, the event $\bigcap_{i \in \{1,2,3\}} E_{i} \cap E_{i}^{\prime} \cap E_{i}^{\prime\prime}$ has probability at least $1-\conf$.
Thus, since this argument holds for any $\conf \in (0,1)$ and any finite sequence $T$ with $\target \in \C[T]$,
we have succeeded in extending the inductive hypothesis to include $m^{\prime} = m$.

By the principle of induction, we have established the claim that, 
$\forall m \!\in\! \nats$,
$\forall \conf \!\in\! (0,1)$,
for every finite sequence $T$ of points in $\X \times \Y$ with $\target \in \C[T]$,
with probability at least $1-\conf$, 
\begin{equation}
\label{eqn:the-error-bound}
\er_{\Px}( \hat{h}_{m,T} ; \target ) \leq \frac{c}{m+1} \left( \dim + \ln\left(\frac{18}{\conf}\right) \right).
\end{equation}
To complete the proof, we simply take $T = \emptyset$ (the empty sequence),
and note that, for any $\eps,\conf \in (0,1)$, for any value $m \in \nats$ of size at least
\begin{equation}
\label{eqn:the-sample-complexity-bound}
\left\lfloor \frac{c}{\eps} \left( \dim + \ln\left(\frac{18}{\conf}\right) \right) \right\rfloor,
\end{equation}
the right hand side of \eqref{eqn:the-error-bound} is less than $\eps$,
so that $\Maj(L(\alg(\cdot;\emptyset)))$ achieves a sample complexity equal the expression in \eqref{eqn:the-sample-complexity-bound}.
In particular, this implies
\begin{equation*}
\SC(\eps,\conf) \leq \frac{c}{\eps} \left( \dim + \ln\left(\frac{18}{\conf}\right) \right)
= O\left( \frac{1}{\eps} \left( \dim + \Log\left(\frac{1}{\conf}\right) \right) \right).
\end{equation*}
\end{proof}

\section{Remarks}
\label{sec:remarks}

On the issue of computational complexity, we note that the construction of subsamples by
$\alg$ can be quite efficient.  Since the branching factor is $3$, while $|S_{0}|$ is reduced
by roughly a factor of $4$ with each recursive call, the total number of subsamples 
returned by $\alg(S;\emptyset)$ is a sublinear function of $|S|$.
Furthermore, with appropriate data structures, 
the operations within each node of the recursion tree can be performed in constant time.
Indeed, as discussed above, 
one can directly determine which data points to include in each subsample via a simple function of the indices,
so that construction of these subsamples truly is computationally easy.

The only remaining significant computational issue in the learning algorithm is then the
efficiency of the sample-consistent base learner $L$. 
The existence of such an algorithm $L$, with running time polynomial in the size of the input sequence and $\dim$,
has been the subject of much investigation for a variety of concept spaces $\C$ \citep*[e.g.,][]{khachiyan:79,karmarkar:84b,valiant:84,pitt:88,helmbold:90}.
For instance, the commonly-used concept space of \emph{linear separators} admits such an algorithm
(where $L(S)$ may be expressed as a solution of a system of linear inequalities).
One can easily extend Theorem~\ref{thm:main} to admit base learners $L$ 
that are \emph{improper} (i.e., which may return classifiers not contained in $\C$), 
as long as they are guaranteed to return a sample-consistent classifier 
in \emph{some} hypothesis space $\H$ of VC dimension $O(\dim)$. 
Furthermore, as discussed by \citet*{pitt:88} and \citet*{haussler:91},
there is a simple technique for efficiently converting \emph{any} 
efficient PAC learning algorithm for $\C$, returning classifiers in $\H$,
into an efficient algorithm $L$ for finding (with probability $1-\conf^{\prime}$) a classifier 
in $\H$ consistent with a given data set $S$ with $\C[S] \neq \emptyset$.
%
%
Additionally, though the analysis above takes $L$ to be deterministic, this merely serves to 
simplify the notation in the proof, 
and it is straightforward to generalize the proof to allow randomized 
base learners $L$, including those that fail to return a sample-consistent classifier with 
some probability $\conf^{\prime}$ taken sufficiently small 
(e.g., $\conf^{\prime} = \conf/(2|\alg(S;\emptyset)|)$).
Composing these facts, we may conclude that, 
for any concept space $\C$ that is efficiently PAC learnable 
using a hypothesis space $\H$ of VC dimension $O(\dim)$,
there exists an efficient PAC learning algorithm for $\C$ with 
optimal sample complexity (up to numerical constant factors).

We conclude by noting that the constant factors obtained in the above proof are 
quite large.  Some small refinements are possible within the current approach: 
for instance, by choosing the $S_{i}$ subsequences slightly larger (e.g., $(3/10)|S|$),
or using a tighter form of the Chernoff bound when lower-bounding $N_{i}$.
However, there are inherent limitations to the approach used here, 
so that reducing the constant factors by more than, say, one order of magnitude,
may require significant changes to some part of the analysis, and perhaps the algorithm itself.
For this reason, it seems the next step in the study of $\SC(\eps,\conf)$ should 
be to search for strategies yielding refined constant factors.
%
In particular, \citet*{warmuth:04} has conjectured that the one-inclusion graph prediction algorithm
also achieves a sample complexity of the optimal form.  This conjecture remains
open at this time.  The one-inclusion graph predictor is known to achieve
the optimal sample complexity in the closely-related \emph{prediction model} of learning 
(where the objective is to achieve \emph{expected} error rate at most $\eps$),
with a numerical constant factor very close to optimal \citep*{haussler:94}.
It therefore seems likely that a (positive) resolution of Warmuth's one-inclusion graph conjecture may also lead to 
improvements in constant factors compared to the bound on $\SC(\eps,\conf)$ established in the present work.

\appendix

\section{A Technical Lemma}
\label{app:lemmas}

The following basic lemma is useful in the proof of Theorem~\ref{thm:main}.\footnote{This lemma and proof also appear in a sibling paper \citep*{hanneke:15b}.}

\begin{lemma}
\label{lem:log-factor}
For any $\a,\b,c_{1} \in [1,\infty)$ and $c_{2} \in [0,\infty)$, 
\begin{equation*}
\a \ln\left( c_{1} \left( c_{2} + \frac{\b}{\a} \right) \right)
\leq \a \ln\left( c_{1} (c_{2} + e) \right) + \frac{1}{e} \b.
\end{equation*}
\end{lemma}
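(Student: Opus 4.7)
The claim reduces to a one-variable inequality about the natural logarithm once the structural factors are peeled off. My plan is as follows. Since $\ln(c_{1}(c_{2}+z)) = \ln(c_{1}) + \ln(c_{2}+z)$, the term $a \ln(c_{1})$ appears on both sides and cancels, so it suffices to establish $a \ln(c_{2} + b/a) \leq a \ln(c_{2} + e) + b/e$; dividing through by the positive quantity $a$ reduces this further to $\ln(c_{2} + b/a) - \ln(c_{2} + e) \leq b/(ae)$.

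To obtain this bound, I would rewrite the left side as $\ln\bigl(1 + (b/a - e)/(c_{2} + e)\bigr)$ and apply the standard tangent-line inequality $\ln(1+x) \leq x$, valid for all $x > -1$. This produces the upper estimate $(b/a - e)/(c_{2} + e)$, which I would then loosen in two elementary steps: drop the $-e$ in the numerator (legal since $e > 0$, only weakens the inequality), and replace $c_{2} + e$ in the denominator by its lower bound $e$ (legal since the resulting numerator $b/a$ is nonnegative). Chaining these gives exactly $b/(ae)$, and multiplying back by $a$ and restoring the $a \ln(c_{1})$ contribution yields the stated inequality.

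The only subtlety worth noting is the regime $b/a \leq e$, where the argument of $\ln(1+\cdot)$ is nonpositive. The tangent-line bound is still valid there, but one can equivalently observe that in this regime $\ln(c_{2}+b/a) \leq \ln(c_{2}+e)$, so the left side of the target inequality is already nonpositive while $b/(ae)$ is nonnegative, making the conclusion trivial. I do not anticipate any genuine obstacle: the entire argument is a short sequence of standard manipulations with $\ln(1+x) \leq x$ as its only analytic ingredient, which is presumably why the authors relegate it to an appendix.
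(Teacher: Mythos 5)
Your argument is correct and takes a genuinely different route from the paper. After the shared reduction (cancelling $a\ln c_{1}$ and dividing by $a$), you write the left side as $\ln\bigl(1 + \tfrac{b/a - e}{c_{2}+e}\bigr)$ and invoke the single tangent-line inequality $\ln(1+x) \leq x$ (valid for $x > -1$, and the argument here is indeed $> -1$ since $b/a + c_{2} > 0$), then loosen the resulting fraction to $b/(ae)$ using $c_{2} + e \geq e$. This gives a uniform, case-free derivation. The paper instead splits on whether $b/a \leq e$: the easy case follows from monotonicity of $\ln$, and in the case $b/a > e$ the paper bounds $c_{2} + b/a \leq \max\{c_{2},2\}\cdot(b/a)$, separates the logarithm into two terms, and controls $a\ln(b/a) = b\cdot\tfrac{\ln(b/a)}{b/a} \leq b/e$ using the fact that $x\mapsto \ln(x)/x$ attains its maximum $1/e$ at $x=e$. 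Both proofs are short, but yours dispenses with the case analysis and the $\max\{c_{2},2\}$ device, relying on a single standard convexity bound; the paper's version makes the role of the constant $e$ (as the maximizer of $\ln x / x$) more visible. Your observation that the $b/a \leq e$ regime is already covered by the tangent-line bound is accurate, so your streamlining is legitimate.
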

\begin{proof}
If $\frac{\b}{\a} \leq e$, then monotonicity of $\ln(\cdot)$ implies 
\begin{equation*}
\a \ln\left( c_{1} \left( c_{2} + \frac{\b}{\a} \right) \right)
\leq \a \ln( c_{1} ( c_{2} + e ) )
\leq \a \ln( c_{1} (c_{2} + e) ) + \frac{1}{e} \b.
\end{equation*}

On the other hand, if $\frac{\b}{\a} > e$, then 
\begin{equation*}
\a \ln\left( c_{1} \left( c_{2} + \frac{\b}{\a} \right) \right)
\leq \a \ln\left( c_{1} \max\{ c_{2}, 2 \} \frac{\b}{\a} \right)
= \a \ln\left( c_{1} \max\{ c_{2}, 2 \} \right) + \a \ln\left( \frac{\b}{\a} \right).
\end{equation*}
The first term in the rightmost expression is at most $\a \ln( c_{1} (c_{2} + 2) ) \leq \a \ln( c_{1} (c_{2} + e) )$.
The second term in the rightmost expression can be rewritten as $\b \frac{\ln( \b/\a )}{\b / \a}$.
Since $x \mapsto \ln(x)/x$ is nonincreasing on $(e,\infty)$, in the case $\frac{\b}{\a} > e$, this is at most $\frac{1}{e} \b$.
Together, we have that 
\begin{equation*}
\a \ln\left( c_{1} \left( c_{2} + \frac{\b}{\a} \right) \right)
\leq \a \ln( c_{1} (c_{2} + e) ) + \frac{1}{e} \b
\end{equation*}
in this case as well.
\end{proof}

\acks{I would like to express my sincere thanks to Hans Simon and Amit Daniely
for helpful comments on a preliminary attempt at a solution.}

\bibliography{learning}

\end{document}